\newtheorem{theorem}{Theorem}[section]
\newcommand{\xhdr}[1]{\vspace{0.3em}\noindent{{\bf #1.}}}
\title{Speculative Contrastive Decoding}
\author{
Hongyi Yuan$^{12}$\thanks{$^*$Work done during internship at Alibaba Inc.} , Keming Lu$^{2}$, Fei Huang$^{2}$, Zheng Yuan$^{2}$, Chang Zhou$^{2}$
\\
$^1$Tsinghua University, $^2$Alibaba Inc. \\
\texttt{yuanhy20@mails.tsinghua.edu.cn} \\
\texttt{\{lukeming.lkm,feihu.hf\}@alibaba-inc.com}\\
\texttt{\{yuanzheng.yuanzhen,ericzhou.zc\}@alibaba-inc.com}\\
}
\begin{document}
\maketitle
\begin{abstract}

Large language models~(LLMs) exhibit exceptional performance in language tasks, yet their auto-regressive inference is limited due to high computational requirements and is sub-optimal due to the exposure bias. 
Inspired by speculative decoding and contrastive decoding, we introduce Speculative Contrastive Decoding~(SCD), a straightforward yet powerful decoding approach that leverages predictions from smaller language models~(LMs) to achieve both decoding acceleration and quality improvement. 
Extensive evaluations and analyses on four diverse language tasks demonstrate the effectiveness of SCD, showing that decoding efficiency and quality can compatibly benefit from one smaller LM.

\end{abstract}

\section{Introduction}
Large language models~(LLMs) have advanced the versatility and proficiency in approaching real-world natural language tasks such as general instruction following \citep{instructgpt,alpaca,instag} and reasoning \citep{cobbe2021training,cot,rft}.
Most existing LLMs (\citet{Brown2020LanguageMA,llama2,qwen},\textit{inter alia}) are built on decoder-only Transformers.
Due to the auto-regressive nature during inference, the runtime of decoding inference can be excessive on general computation infrastructure, and the generation quality can be sub-optimal due to the exposure bias \citep{arora-etal-2022-exposure}.
Improving decoding inference has been the spotlight of the research community in language generation \citep{vijayakumar2018diverse,Holtzman2020The,su2022contrastive}.

As for decoding acceleration, one prominent method named speculative decoding~\cite{Leviathan2022FastIF,chen2023accelerating} has been proposed and leverages relatively smaller language models~(LMs) to predict several successive token generations of target LLMs. 
The LLMs only require one-time forward computation for checking the validity of predictions from the smaller LMs. 
The decoding method maintains the target LLMs' token distributions and accelerates more when smaller LMs can accurately predict the potential target LLMs' generations.


As for the generation quality, contrastive decoding has been recently proposed \citep{li-etal-2023-contrastive}. 
Contrastive decoding assumes that conjugated smaller LMs may present higher systematic tendencies to generate erroneous tokens than the larger ones, and the method seeks to eliminate such systematic error by contrasting the token distribution between smaller LMs and larger LMs. 
From either inference acceleration or quality improvement, these works have demonstrated a promising direction by integrating smaller LMs during auto-regressive generation. 

Inspired by both speculative and contrastive decoding, we propose Speculative Contrastive Decoding~(SCD), which exploits a single smaller LM for decoding improvement in speed and quality en bloc. 
Comprehensive evaluations of four diverse tasks show that SCD can achieve similar acceleration factors of speculative decoding while maintaining the quality improvement from contrastive decoding. 
By further analyzing the token distributions of the smaller and larger LMs in SCD, we show the inherent compatibility of decoding acceleration and quality improvement.
The contributions of this paper can be summarized as follows:
\begin{itemize}[leftmargin=1em]
    \vspace{-0.4em}
    \setlength\itemsep{-0.3em}
    \item We propose Speculative Contrastive Decoding for efficacious LLM inference.
    \item Comprehensive experiments and analysis illustrate the compatibility of speculative and contrastive decoding on 4 diverse tasks. 
\end{itemize}

\section{Related Works}

In terms of inference acceleration, recent research has been devoted to developing various efficient decoding methods \citep{yao2022zeroquant,kwon2023efficient,medusa}. Speculative decoding \citet{Leviathan2022FastIF,chen2023accelerating,Kim2023SpeculativeDW} is one of these recent works and utilizes smaller models for acceleration. 
\citet{miao2023specinfer,spector2023accelerating} propose to organize predictions from small LMs into tree structures to accelerate speculative decoding further.
In terms of inference quality, rich research has been suggested \citep{vijayakumar2018diverse,Holtzman2020The,su2022contrastive,su2022empirical,finlayson2023closing} and contrastive decoding achieves better decoding qualities by similarly integrating smaller LMs and devise contrastive token distributions~\citep{li-etal-2023-contrastive,obrien2023contrastive}. 
It can further be adjusted to other variants such as the token distribution contrasting between model layers~\cite{chuang2023dola} or different inputs~\cite{yona2023surfacing}.
SCD draws inspiration from these works and benefits both decoding speed and quality by incorporating smaller LMs into generation.


\section{Preliminaries}

We follow the terminology in \citet{li-etal-2023-contrastive}, and term the target larger LMs as the expert LMs while the smaller LMs as the amateur LMs denoted as $\mathcal{M}_e$ and $\mathcal{M}_a$ respectively. 

\subsection{Contrastive Decoding}\label{sec:cd}

The intrinsic rationale of contrastive decoding~(CD) is that amateur LMs have stronger systematic undesirable tendencies to produce undesirable patterns (e.g., hallucination) than expert LMs. 
By contrasting the token distributions between expert and amateur LMs, such tendencies can be alleviated.
There have been successively proposed two versions of contrastive decoding by \citet{li-etal-2023-contrastive} and \citet{obrien2023contrastive}, which we term as \textit{Original} contrastive decoding and \textit{Improved} contrastive decoding.
The final contrastive logit scores for the original contrastive decoding $s_{\text{ori}}(x_i|x_{<i})$ and the improved contrastive decoding $s_{\text{imp}}(x_i|x_{<i})$ are respectively:

\vspace{-12pt}
\small{
\begin{align*}
   & s_{\text{ori}}(x_i|x_{<i}) = \\
   & \left\{
   \begin{array}{lc}
             \log P_{\mathcal{M}_e}(x_i|x_{<i}) - \log P_{\mathcal{M}_a}(x_i|x_{<i}), & x_i\in\mathcal{V}^\alpha_{\text{ori},i}  \\
             -\infty, & x_i\notin\mathcal{V}^\alpha_{\text{ori},i} \\
    \end{array}
    \right.\\
    & s_{\text{imp}}(x_i|x_{<i}) = \\
   & \left\{
   \begin{array}{lc}
             (1+\beta) Y_{\mathcal{M}_e}(x_i|x_{<i}) - \beta Y_{\mathcal{M}_a}(x_i|x_{<i}), & x_i\in\mathcal{V}^\alpha_{\text{imp},i}  \\
             -\infty, & x_i\notin\mathcal{V}^\alpha_{\text{imp},i} \\
    \end{array}
    \right.
\end{align*}
}\normalsize

where $P_{\cdot}$ and $Y_{\cdot}$ are respectively the token probability and logit generated from LMs. $\mathcal{V}^\alpha_{\cdot,i}$ denotes the adaptive plausibility constraint that dynamically restricts the logits from producing the erroneous modes. The adaptive plausibility constraints are calculated as

\vspace{-12pt}
\small{
\begin{align*}
    &\mathcal{V}^\alpha_{\text{ori},i} = \left\{w|P_{\mathcal{M}_e}(w|x_{<i}) > \alpha \max_{w\in \mathcal{V}} P_{\mathcal{M}_e}(w|x_{<i})\right\}, \\
    &\mathcal{V}^\alpha_{\text{imp},i} = \left\{w|Y_{\mathcal{M}_e}(w|x_{<i}) > \log\alpha + \max_{w\in \mathcal{V}} Y_{\mathcal{M}_e}(w|x_{<i})\right\}. 
\end{align*}
}\normalsize

A token is generated from the contrastive token distribution $P^\tau_{n}(x_i) = \operatorname{softmax}_\tau\left(s_{n}(x_i|x_{<i})\right)$, $n\in\{\text{ori},\text{imp}\}$, where $\tau$ represents the softmax temperature that determines the smoothness of the contrastive token distribution. 





\subsection{Speculative Decoding}

Instead of requiring one forward computation of $\mathcal{M}_e$ for each token in vanilla decoding, speculative decoding (SD) utilizes $\mathcal{M}_a$ to primarily generate $\gamma$ tokens at each iteration then $\mathcal{M}_e$ makes one forward computation to check the validity of the $\gamma$ tokens. 
If $\mathcal{M}_e$ accepts all the $\gamma$ tokens, it finishes the iteration with an additional generated token, resulting in $\gamma + 1$ tokens generated. 
Otherwise, if $\mathcal{M}_e$ rejects a token at $r$, the token is re-sampled according to $\mathcal{M}_e$ to substitute the rejected token; hence the iteration finishes with $r$ tokens generated. 
With only one-time forward computation of $\mathcal{M}_e$, multiple tokens are generated at each iteration. When the ratio between the runtime required of $\mathcal{M}_a$ and $\mathcal{M}_e$ (the cost coefficient $c$, \citet{Leviathan2022FastIF}) is low and the token acceptance rate is high, there will present a notable acceleration.



\begin{algorithm}[t]
\small
\caption{\small{Speculative Contrastive Decoding}}\label{algo}
\KwData{$\mathcal{M}_e$, $\mathcal{M}_a$, input prefix $x_{\text{inp}}$}
\KwResult{$[x_{\text{inp}}, x_1,..,x_{\text{k}}]$}
\For{$i$ from $1$ to $\gamma$}{
$x_i \sim P_{\mathcal{M}_a}(x_i) = \mathcal{M}_a(x_i|x_{\text{inp}},x_{<i})$\;
}
$P_{\mathcal{M}_e}(x_1),..,P_{\mathcal{M}_e}(x_{\gamma+1}) = \mathcal{M}_e(x_1,..,x_\gamma|x_{\text{inp}})$\;
\textcolor{blue}{Calculate $P_n(x_1),..,P_n(x_\gamma)$ following Section \Cref{sec:cd}\;}
$r_1,..,r_\gamma$  i.i.d sampled from  $ \text{Uniform}(0, 1)$\;
\textcolor{blue}{$k = \min\left( \{i|r_i >\frac{P_{n}(x_i)}{P_{\mathcal{M}_a}(x_i)}\}\cup \{\gamma+1\}\right)$\;}
\uIf{$k \leq \gamma$}{
    $P_{k}(x_{k}) = \operatorname{norm}(\max(0, P_{n}(x_k)- P_{\mathcal{M}_a}(x_k))$\;
    Resample $x_k \sim P_{k}(x_{k})$\;
  }
  \Else{
    \textcolor{blue}{$P_{\mathcal{M}_a}(x_{\gamma+1}) = \mathcal{M}_a(x_{\gamma+1}|x_{\text{inp}},x_{1},..,x_\gamma)$\;}
    \textcolor{blue}{Calculate $P_n(x_{\gamma+1})$ following Section \Cref{sec:cd}\;}
    $x_{\gamma+1} \sim P_n(x_{\gamma+1})$\;
  }
\end{algorithm}

\section{Speculative Contrastive Decoding}

Speculative decoding leverages smaller $\mathcal{M}_a$ only for generation acceleration, while not making the best of the token distributions from $\mathcal{M}_a$. 
It is natural to simultaneously apply the contrastive token distribution, and with negligible computational overhead, the generation quality and efficiency can benefit from integrating speculative and contrastive decoding. 
Therefore, we propose Speculative Contrastive Decoding (SCD).

Concretely, at each iteration, $\gamma$ tokens are generated from the amateur model $\mathcal{M}_a$. 
When checking the validity of the tokens, the target distribution becomes $P^\tau_{n}, n\in\{\text{ori},\text{imp}\}$ from contrastive distribution instead of $P_{\mathcal{M}_e}$ in speculative decoding. 
For a token $x$ in the $\mathcal{M}_a$-generated tokens, it is rejected with probability $1-\frac{P^\tau_{n}(x)}{P_{\mathcal{M}_a}(x)}$ and then a new token in place of $x$ is re-sampled from $\operatorname{norm}(\max(0, P^\tau_{n}(x)- P_{\mathcal{M}_a}(x))$, where $\operatorname{norm}\left(f(x)\right)=f(x)/\sum_xf(x), \text{s.t.} f(x)\ge0$. 
If all the $\mathcal{M}_a$-generated tokens are accepted, then an additional token is sampled from $P_{n}^\tau$.

The sampling procedure of SCD is similar to the original speculative decoding in \citet{Leviathan2022FastIF,chen2023accelerating}. 
However, it is worth noticing that in our SCD, when all the $\mathcal{M}_a$-generated tokens are accepted, we require an additional forward computation from $\mathcal{M}_a$ to acquire its last token logit for calculating the contrastive distribution $P_n^\tau$ at that iteration, while in speculative decoding, the additional token is sampled directly from $\mathcal{M}_e$.
This computational overhead is negligible when $c$ is small. 
We detailed the algorithm of our SCD in Algorithm \Cref{algo}. The difference from the original speculative decoding is highlighted in \textcolor{blue}{blue}.

\begin{figure*}[t]
    \centering
    \includegraphics[width=\linewidth]{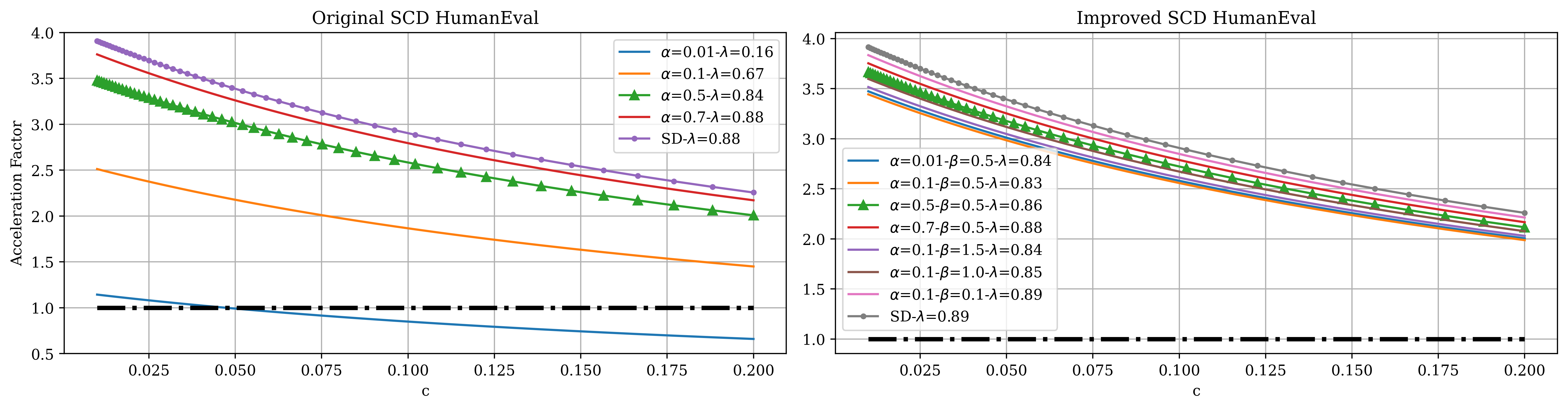}\vspace{-7pt}
   \caption{
    Hyper-parameter analysis on expected acceleration factors regarding empirical acceptance rate $\lambda$.
    The best hyper-parameter settings as in \Cref{tab:main_table} are the lines marked with triangles.
    }\vspace{-7pt}
    \label{fig:analysis_hyperparam_humaneval}
\end{figure*}

\begin{figure*}[t]
    \centering
    \includegraphics[width=\linewidth]{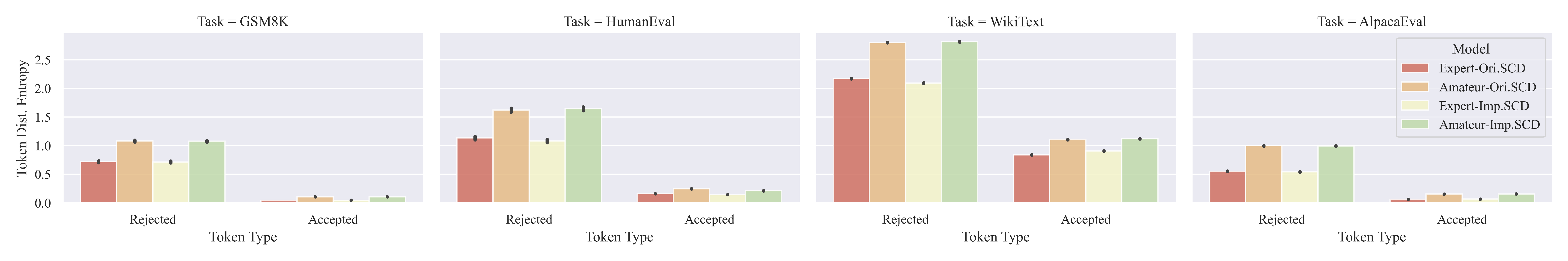}\vspace{-7pt}
    \caption{
    The averaged token distribution entropy with error bars of rejected and accepted tokens in SCD.
    }\vspace{-7pt}
    \label{fig:entropy_analysis}
\end{figure*}

\section{Experiment}\label{sec:experiment}

\xhdr{Experiment Setting}
We evaluate SCD and other baselines on four benchmarks: \textbf{WikiText}~\cite{merity2016pointer}, \textbf{HumanEval}~\cite{chen2021evaluating}, \textbf{AlpacaEval}~\cite{alpaca_eval}, and \textbf{GSM8k}~\cite{cobbe2021training}.
The four benchmarks span diverse language tasks of open-ended generation, code generation, human alignment, and mathematical reasoning respectively.
For WikiText, we use the pre-trained Llama2\textsubscript{7B} and Llama2\textsubscript{70B} \citep{llama2} as $\mathcal{M}_a$ and $\mathcal{M}_e$ and follow \citet{li-etal-2023-contrastive} to use diversity, MAUVE \citep{pillutla2021mauve} and coherence as evaluation metrics. 
For HumanEval, we use the pre-trained Llama2\textsubscript{7B} and Llama2\textsubscript{70B} and assess the 1-round pass rate. 
For AlpacaEval, we use human-aligned Llama2chat\textsubscript{7B} and Llama2chat\textsubscript{70B} and report win-rates over \textit{text-davinci-003} judged by GPT-4.
For GSM8k, we use fine-tuned Llama2\textsubscript{7B} and Llama2\textsubscript{70B} on its training set and report the accuracy of the test-set results.
We set $\gamma=4$ across all experiments and set the temperature $\tau$ to 0.7 for WikiText and AlpacaEval and 0.001 for GSM8k and HumanEval.
We leave the detailed experiment settings to \Cref{app:exp_set}.

\begin{table}[t]
\centering
\small
\setlength{\tabcolsep}{0.5mm}{
\begin{tabular}{lcccccc}
\toprule
 & \multicolumn{3}{c}{\textbf{WikiText}}& \textbf{A.Eval} & \textbf{GSM8k} & \textbf{H.Eval}  \\
 &  Div. & MAU. &  Coh.& Score & Acc. &Pass@1 \\
\midrule
$\mathcal{M}_a$ &$0.69_{.00}$ &$0.88_{.01}$ &$0.76_{.00}$ &$88.79_{1.1}$ &$41.77_{.00}$& $11.59_{.0}$  \\
$\mathcal{M}_e$ &$0.75_{.00}$ &$0.88_{.01}$ &$0.75_{.00}$ &$94.66_{.79}$ &$64.19_{.04}$ & $28.66_{.0}$  \\
SD&$0.75_{.00}$ &$0.90_{.01}$ &$0.75_{.01}$ &$94.28_{.83}$ &$64.27_{.07}$ &$28.66_{.0}$  \\
\midrule
CD\textsubscript{ori} &$0.91_{.00}$ &$0.95_{.00}$ &$0.73_{.00}$ &$94.56_{.82}$ &$64.42_{.03}$ &$37.20_{.0}$ \\
SCD\textsubscript{ori} &$0.91_{.00}$ &$0.94_{.00}$ &$0.72_{.01}$ &$94.91_{.78}$ &$64.44_{.06}$ & $37.20_{.0}$  \\
E.A.\textsubscript{ori} &\multicolumn{3}{c}{$\times1.78$}&$\times2.92$&$\times3.32$&$\times3.01$\\
\midrule
CD\textsubscript{imp} &$0.73_{.01}$ &$0.90_{.01}$ &$0.74_{.00}$ &$94.78_{.79}$ &$64.91_{.01}$ & $33.54_{.0}$ \\
SCD\textsubscript{imp} &$0.73_{.00}$ &$0.91_{.01}$ &$0.74_{.00}$ &$95.03_{.77}$ &$64.90_{.02}$ & $33.54_{.0}$  \\
E.A.\textsubscript{imp}&\multicolumn{3}{c}{$\times2.10$}&$\times2.95$ &$\times3.32$&$\times3.18$ \\
\bottomrule
\end{tabular}}
\caption{
Main results of SCD.
H.Eval, and A.Eval are shorts for HumanEval and AlpacaEval.
MAU. and Coh. are shorts for MAUVE and coherence. 
E.A. presents the expected acceleration under $c=0.05$.
The standard errors under 3 repetitions for each result are marked in subscripts.
The best choices of $\alpha$ and $\beta$ for (S)CD are left to \Cref{app:best-hyper}.
}\vspace{-10pt}
\label{tab:main_table}
\end{table}

\xhdr{Quality Results}
As shown in \Cref{tab:main_table}, original and improved SCD and CD demonstrate significant improvement over $\mathcal{M}_e$ in GSM8k and HumanEval. 
On WikiText, only original CD and SCD outperform $\mathcal{M}_e$ in terms of diversity with $+0.16$ and MAUVE with $+0.06$. 
There is no obvious improvement in Coherence.
On AlpacaEval, although both versions of SCD and CD show better results than $\mathcal{M}_e$, such improvement is not significant due to the high variance of GPT4-as-a-judge.
We can see that different versions of SCD suggest different levels of improvement. Original SCD performs better on WikiText and HumanEval while inferior on GSM8k to improved SCD. 
Results across four benchmarks show SCD can benefit various LLMs on diverse language tasks, maintaining the same generation quality improvement as CD.

\xhdr{Acceleration}
To demonstrate the inference acceleration of SCD, we primarily provide the expected acceleration factor of SCD theoretically with respect to the number of $\mathcal{M}_a$ token predictions per iteration $\gamma$, the acceptance rate $\lambda$, and the cost coefficient $c$, which proof is left to \Cref{app:proof}.
\begin{theorem}\label{theo:main}
The expected acceleration factor in decoding runtime is $\frac{1-\lambda^{\gamma+1}}{(1-\lambda)(1 + c\gamma + c\lambda^\gamma)}$.
\end{theorem}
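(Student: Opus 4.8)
The plan is to compute the expected wall-clock runtime per iteration and divide it by the expected number of tokens produced per iteration, then compare against the baseline cost of autoregressive decoding from $\mathcal{M}_e$. First I would fix attention on a single iteration. In each iteration we pay the runtime of $\gamma$ sequential forward passes of $\mathcal{M}_a$ (to draft the tokens) plus one forward pass of $\mathcal{M}_e$ (to check them in parallel). Normalizing the cost of an $\mathcal{M}_e$ pass to $1$ and writing the cost of an $\mathcal{M}_a$ pass as $c$, the drafting-plus-checking cost is $1 + c\gamma$. The one subtlety flagged in the algorithm is the extra $\mathcal{M}_a$ forward pass needed only when all $\gamma$ drafted tokens are accepted, in order to form the contrastive distribution for the bonus token; this event has probability $\lambda^\gamma$ under the i.i.d.\ acceptance model, contributing an expected extra cost of $c\lambda^\gamma$. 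Hence the expected per-iteration runtime is $1 + c\gamma + c\lambda^\gamma$, which is exactly the denominator.

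Next I would compute the expected number of tokens emitted per iteration. Under the standard speculative-decoding assumption that each drafted token is accepted independently with probability $\lambda$, the number of accepted tokens before the first rejection is geometric-like and truncated at $\gamma$. The key identity is that the expected number of tokens generated per iteration equals $\frac{1-\lambda^{\gamma+1}}{1-\lambda}$: this follows from summing $\sum_{i=0}^{\gamma}\lambda^i$, since with probability $\lambda^i(1-\lambda)$ exactly $i$ tokens are accepted and one more is resampled (yielding $i+1$ tokens), while with probability $\lambda^\gamma$ all $\gamma$ are accepted and a bonus token is added (yielding $\gamma+1$ tokens). I would verify that the telescoping of these cases collapses to the geometric sum $\sum_{i=0}^{\gamma}\lambda^i = \frac{1-\lambda^{\gamma+1}}{1-\lambda}$, mirroring the expected-token calculation in \citet{Leviathan2022FastIF}.

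Finally I would assemble the acceleration factor. The baseline produces one token per $\mathcal{M}_e$ forward pass, so generating $N$ tokens costs $N$ in normalized runtime. SCD generates $\frac{1-\lambda^{\gamma+1}}{1-\lambda}$ tokens at a runtime cost of $1 + c\gamma + c\lambda^\gamma$ per iteration, so its cost per token is $\frac{(1-\lambda)(1+c\gamma+c\lambda^\gamma)}{1-\lambda^{\gamma+1}}$. The acceleration factor is the ratio of baseline cost per token to SCD cost per token, which is the reciprocal, namely $\frac{1-\lambda^{\gamma+1}}{(1-\lambda)(1+c\gamma+c\lambda^\gamma)}$, as claimed.

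The main obstacle is getting the accounting of the bonus-token case exactly right, since it is the one place SCD diverges from vanilla speculative decoding: the extra $\mathcal{M}_a$ pass enters the runtime (the $c\lambda^\gamma$ term in the denominator) but must not be double-counted in the token tally. I would therefore be careful to treat the all-accepted event consistently in both the numerator and the denominator, and to confirm that the i.i.d.\ acceptance model with a single rate $\lambda$ is the intended idealization, so that the geometric sum applies cleanly.
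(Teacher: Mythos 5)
Your proposal is correct and follows essentially the same route as the paper's proof: compute the expected per-iteration runtime $T(1 + c\gamma + c\lambda^\gamma)$ (including the extra $\mathcal{M}_a$ forward pass on the all-accepted event of probability $\lambda^\gamma$), divide by the expected number of tokens per iteration $\frac{1-\lambda^{\gamma+1}}{1-\lambda}$ taken from Equation (1) of \citet{Leviathan2022FastIF}, and take the reciprocal of the resulting per-token cost. The only difference is that you spell out the geometric-sum derivation of the expected token count, which the paper simply cites.
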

In \Cref{tab:main_table}, consistent acceleration is presented across different benchmarks.
We further visualize the expected acceleration factor of SCD in \Cref{fig:analysis_hyperparam_humaneval} according to the empirical acceptance rates $\lambda$ in HumanEval with different hyper-parameter settings. 
According to \Cref{theo:main}, the acceleration factors are depicted against the cost coefficient $c$, which is usually of small values representing the ratio of runtime required of $\mathcal{M}_a$ and $\mathcal{M}_e$ and depends on the infrastructures (e.g., GPU) that serve the LLMs. We can see that the acceptance rates hence the corresponding acceleration factors of original SCD are more sensitive to hyper-parameters compared to improved SCD. With proper hyper-parameters, SCD can achieve similar acceleration to the speculative decoding (dotted lines), which indicates the negligible speed trade-off to incorporate the contrastive token distributions. Results on GSM8k are listed in \Cref{app:addition_result} presenting similar patterns.

\section{Analysis}\label{sec:analysis}
\xhdr{Compatibility}
Results presented in \Cref{sec:experiment} show SCD can combine the benefits of CD and SD.
We delve deep into the reasons for such compatibility.
We calculate the average entropy of token probabilities from $\mathcal{M}_a$ and $\mathcal{M}_e$ regarding the accepted and rejected tokens in SCD.
As shown in \Cref{fig:entropy_analysis}, token distribution entropy from both $\mathcal{M}_a$ and $\mathcal{M}_e$ of accepted tokens is significantly higher than that of rejected tokens.
The phenomenon suggests SCD enjoys acceleration from accepting easy tokens of lower entropy while benefiting from contrastive token distribution by rejecting hard tokens of higher entropy. 
We also present a case study from GSM8k in \Cref{app:case} to demonstrate such compatibility.

\begin{figure}[t]
    \centering
    \includegraphics[width=\linewidth]{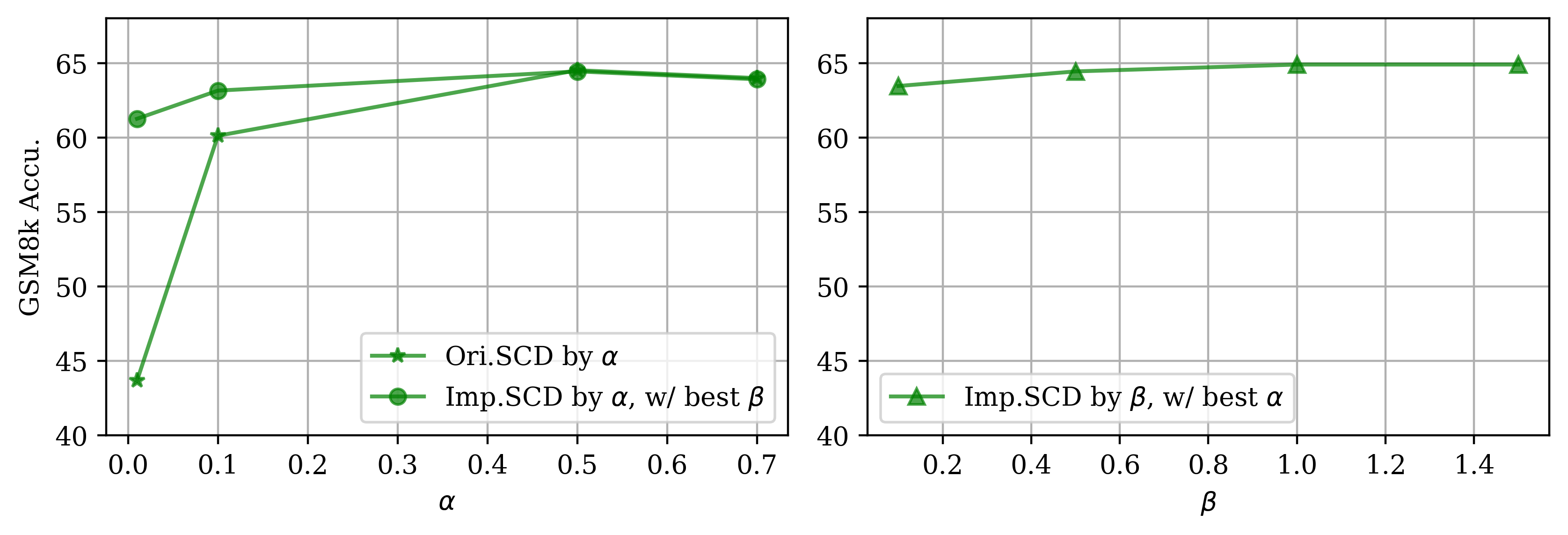}\vspace{-10pt}
    \caption{ Performance sensitivity regarding $\alpha$ and $\beta$.
    }
    \vspace{-10pt}
    \label{fig:analysis_performance}
\end{figure}

\xhdr{Sensitivity} 
Through \Cref{fig:analysis_performance}, we show how performances fluctuate with respect to the hyper-parameter $\alpha$ and $\beta$. We can see that improved SCD is less sensitive to both $\alpha$ and $\beta$ on GSM8k compared to the original SCD. 
This is possibly due to the better flexibility of manipulating logits than probabilities. 
Results on HumanEval are listed in \Cref{app:addition_result} presenting similar phenomenons.



\section{Conclusion}

In this paper, we propose speculative contrastive decoding, a decoding strategy that naturally integrates small amateur LMs for inference acceleration and quality improvement of LLMs.
Extensive experiments show the effectiveness of SCD and our delve-deep analysis also explains the compatibility through the scope of token distribution entropy. 
Our method can be easily deployed to improve the real-world serving of LLMs.

\section*{Limitation}

In our experiments, we provide the expected acceleration factors of SCD on four benchmarks calculated according to the empirical token acceptance rates $\lambda$ and selected cost coefficients $c$.
The empirical acceleration factor is highly correlated to the actual infrastructures that serve both the larger LMs and the smaller LMs.
To compensate for this demonstration limitation and better demonstrate the acceleration performance, we visualize the expected acceleration factor by spanning across a range of $c$ in \Cref{fig:analysis_hyperparam_humaneval}.
This is a common limitation of deploying speculative decoding in the real-world LLM serving.
For example, the runtime of switching between the forward computation of $\mathcal{M}_a$ and $\mathcal{M}_e$ would be non-negligible without properly optimized infrastructures, causing a relatively large $c$ hence potentially resulting in deceleration even with high acceptance rates.

\section*{Broader Impact}

Although LLMs have demonstrated exceptional performance and been helpful real-world assistants recently, the massive computational demands of LLMs forbid most users including potential researchers from local deployments, who generally alter to use APIs from LLM servings.
Therefore, effective methods, including our SCD, to improve the speed and quality from the perspective of decoding inference have much potential to advance LLM-based services.

\bibliography{anthology,custom}

\begin{thebibliography}{33}
\expandafter\ifx\csname natexlab\endcsname\relax\def\natexlab#1{#1}\fi

\bibitem[{Arora et~al.(2022)Arora, El~Asri, Bahuleyan, and Cheung}]{arora-etal-2022-exposure}
Kushal Arora, Layla El~Asri, Hareesh Bahuleyan, and Jackie Cheung. 2022.
\newblock \href {https://doi.org/10.18653/v1/2022.findings-acl.58} {Why exposure bias matters: An imitation learning perspective of error accumulation in language generation}.
\newblock In \emph{Findings of the Association for Computational Linguistics: ACL 2022}, pages 700--710, Dublin, Ireland. Association for Computational Linguistics.

\bibitem[{Bai et~al.(2023)Bai, Bai, Chu, Cui, Dang, Deng, Fan, Ge, Han, Huang, Hui, Ji, Li, Lin, Lin, Liu, Liu, Lu, Lu, Ma, Men, Ren, Ren, Tan, Tan, Tu, Wang, Wang, Wang, Wu, Xu, Xu, Yang, Yang, Yang, Yang, Yao, Yu, Yuan, Yuan, Zhang, Zhang, Zhang, Zhang, Zhou, Zhou, Zhou, and Zhu}]{qwen}
Jinze Bai, Shuai Bai, Yunfei Chu, Zeyu Cui, Kai Dang, Xiaodong Deng, Yang Fan, Wenbin Ge, Yu~Han, Fei Huang, Binyuan Hui, Luo Ji, Mei Li, Junyang Lin, Runji Lin, Dayiheng Liu, Gao Liu, Chengqiang Lu, Keming Lu, Jianxin Ma, Rui Men, Xingzhang Ren, Xuancheng Ren, Chuanqi Tan, Sinan Tan, Jianhong Tu, Peng Wang, Shijie Wang, Wei Wang, Shengguang Wu, Benfeng Xu, Jin Xu, An~Yang, Hao Yang, Jian Yang, Shusheng Yang, Yang Yao, Bowen Yu, Hongyi Yuan, Zheng Yuan, Jianwei Zhang, Xingxuan Zhang, Yichang Zhang, Zhenru Zhang, Chang Zhou, Jingren Zhou, Xiaohuan Zhou, and Tianhang Zhu. 2023.
\newblock \href {http://arxiv.org/abs/2309.16609} {Qwen technical report}.

\bibitem[{Brown et~al.(2020)Brown, Mann, Ryder, Subbiah, Kaplan, Dhariwal, Neelakantan, Shyam, Sastry, Askell, Agarwal, Herbert-Voss, Krueger, Henighan, Child, Ramesh, Ziegler, Wu, Winter, Hesse, Chen, Sigler, Litwin, Gray, Chess, Clark, Berner, McCandlish, Radford, Sutskever, and Amodei}]{Brown2020LanguageMA}
Tom~B. Brown, Benjamin Mann, Nick Ryder, Melanie Subbiah, Jared Kaplan, Prafulla Dhariwal, Arvind Neelakantan, Pranav Shyam, Girish Sastry, Amanda Askell, Sandhini Agarwal, Ariel Herbert-Voss, Gretchen Krueger, T.~J. Henighan, Rewon Child, Aditya Ramesh, Daniel~M. Ziegler, Jeff Wu, Clemens Winter, Christopher Hesse, Mark Chen, Eric Sigler, Mateusz Litwin, Scott Gray, Benjamin Chess, Jack Clark, Christopher Berner, Sam McCandlish, Alec Radford, Ilya Sutskever, and Dario Amodei. 2020.
\newblock \href {https://api.semanticscholar.org/CorpusID:218971783} {Language models are few-shot learners}.
\newblock \emph{ArXiv}, abs/2005.14165.

\bibitem[{Cai et~al.(2023)Cai, Li, Geng, Peng, and Dao}]{medusa}
Tianle Cai, Yuhong Li, Zhengyang Geng, Hongwu Peng, and Tri Dao. 2023.
\newblock Medusa: Simple framework for accelerating llm generation with multiple decoding heads.
\newblock \url{https://github.com/FasterDecoding/Medusa}.

\bibitem[{Chen et~al.(2023)Chen, Borgeaud, Irving, Lespiau, Sifre, and Jumper}]{chen2023accelerating}
Charlie Chen, Sebastian Borgeaud, Geoffrey Irving, Jean-Baptiste Lespiau, Laurent Sifre, and John Jumper. 2023.
\newblock \href {http://arxiv.org/abs/2302.01318} {Accelerating large language model decoding with speculative sampling}.

\bibitem[{Chen et~al.(2021)Chen, Tworek, Jun, Yuan, Pinto, Kaplan, Edwards, Burda, Joseph, Brockman et~al.}]{chen2021evaluating}
Mark Chen, Jerry Tworek, Heewoo Jun, Qiming Yuan, Henrique Ponde de~Oliveira Pinto, Jared Kaplan, Harri Edwards, Yuri Burda, Nicholas Joseph, Greg Brockman, et~al. 2021.
\newblock Evaluating large language models trained on code.
\newblock \emph{arXiv preprint arXiv:2107.03374}.

\bibitem[{Chuang et~al.(2023)Chuang, Xie, Luo, Kim, Glass, and He}]{chuang2023dola}
Yung-Sung Chuang, Yujia Xie, Hongyin Luo, Yoon Kim, James Glass, and Pengcheng He. 2023.
\newblock Dola: Decoding by contrasting layers improves factuality in large language models.
\newblock \emph{arXiv preprint arXiv:2309.03883}.

\bibitem[{Cobbe et~al.(2021)Cobbe, Kosaraju, Bavarian, Chen, Jun, Kaiser, Plappert, Tworek, Hilton, Nakano et~al.}]{cobbe2021training}
Karl Cobbe, Vineet Kosaraju, Mohammad Bavarian, Mark Chen, Heewoo Jun, Lukasz Kaiser, Matthias Plappert, Jerry Tworek, Jacob Hilton, Reiichiro Nakano, et~al. 2021.
\newblock Training verifiers to solve math word problems.
\newblock \emph{arXiv preprint arXiv:2110.14168}.

\bibitem[{Finlayson et~al.(2023)Finlayson, Hewitt, Koller, Swayamdipta, and Sabharwal}]{finlayson2023closing}
Matthew Finlayson, John Hewitt, Alexander Koller, Swabha Swayamdipta, and Ashish Sabharwal. 2023.
\newblock \href {http://arxiv.org/abs/2310.01693} {Closing the curious case of neural text degeneration}.

\bibitem[{Gao and Wan(2022)}]{gao-wan-2022-dialsummeval}
Mingqi Gao and Xiaojun Wan. 2022.
\newblock \href {https://doi.org/10.18653/v1/2022.naacl-main.418} {{D}ial{S}umm{E}val: Revisiting summarization evaluation for dialogues}.
\newblock In \emph{Proceedings of the 2022 Conference of the North American Chapter of the Association for Computational Linguistics: Human Language Technologies}, pages 5693--5709, Seattle, United States. Association for Computational Linguistics.

\bibitem[{Gao et~al.(2021)Gao, Yao, and Chen}]{gao-etal-2021-simcse}
Tianyu Gao, Xingcheng Yao, and Danqi Chen. 2021.
\newblock \href {https://doi.org/10.18653/v1/2021.emnlp-main.552} {{S}im{CSE}: Simple contrastive learning of sentence embeddings}.
\newblock In \emph{Proceedings of the 2021 Conference on Empirical Methods in Natural Language Processing}, pages 6894--6910, Online and Punta Cana, Dominican Republic. Association for Computational Linguistics.

\bibitem[{Holtzman et~al.(2020)Holtzman, Buys, Du, Forbes, and Choi}]{Holtzman2020The}
Ari Holtzman, Jan Buys, Li~Du, Maxwell Forbes, and Yejin Choi. 2020.
\newblock \href {https://openreview.net/forum?id=rygGQyrFvH} {The curious case of neural text degeneration}.
\newblock In \emph{International Conference on Learning Representations}.

\bibitem[{Kim et~al.(2023)Kim, Mangalam, Moon, Malik, Mahoney, Gholami, and Keutzer}]{Kim2023SpeculativeDW}
Sehoon Kim, Karttikeya Mangalam, Suhong Moon, Jitendra Malik, Michael~W. Mahoney, Amir Gholami, and Kurt Keutzer. 2023.
\newblock \href {https://api.semanticscholar.org/CorpusID:256868484} {Speculative decoding with big little decoder}.

\bibitem[{Kwon et~al.(2023)Kwon, Li, Zhuang, Sheng, Zheng, Yu, Gonzalez, Zhang, and Stoica}]{kwon2023efficient}
Woosuk Kwon, Zhuohan Li, Siyuan Zhuang, Ying Sheng, Lianmin Zheng, Cody~Hao Yu, Joseph~E. Gonzalez, Hao Zhang, and Ion Stoica. 2023.
\newblock Efficient memory management for large language model serving with pagedattention.
\newblock In \emph{Proceedings of the ACM SIGOPS 29th Symposium on Operating Systems Principles}.

\bibitem[{Leviathan et~al.(2022)Leviathan, Kalman, and Matias}]{Leviathan2022FastIF}
Yaniv Leviathan, Matan Kalman, and Yossi Matias. 2022.
\newblock \href {https://api.semanticscholar.org/CorpusID:254096365} {Fast inference from transformers via speculative decoding}.
\newblock In \emph{International Conference on Machine Learning}.

\bibitem[{Li et~al.(2023{\natexlab{a}})Li, Holtzman, Fried, Liang, Eisner, Hashimoto, Zettlemoyer, and Lewis}]{li-etal-2023-contrastive}
Xiang~Lisa Li, Ari Holtzman, Daniel Fried, Percy Liang, Jason Eisner, Tatsunori Hashimoto, Luke Zettlemoyer, and Mike Lewis. 2023{\natexlab{a}}.
\newblock \href {https://doi.org/10.18653/v1/2023.acl-long.687} {Contrastive decoding: Open-ended text generation as optimization}.
\newblock In \emph{Proceedings of the 61st Annual Meeting of the Association for Computational Linguistics (Volume 1: Long Papers)}, pages 12286--12312, Toronto, Canada. Association for Computational Linguistics.

\bibitem[{Li et~al.(2023{\natexlab{b}})Li, Zhang, Dubois, Taori, Gulrajani, Guestrin, Liang, and Hashimoto}]{alpaca_eval}
Xuechen Li, Tianyi Zhang, Yann Dubois, Rohan Taori, Ishaan Gulrajani, Carlos Guestrin, Percy Liang, and Tatsunori~B. Hashimoto. 2023{\natexlab{b}}.
\newblock Alpacaeval: An automatic evaluator of instruction-following models.
\newblock \url{https://github.com/tatsu-lab/alpaca_eval}.

\bibitem[{Lu et~al.(2023)Lu, Yuan, Yuan, Lin, Lin, Tan, Zhou, and Zhou}]{instag}
Keming Lu, Hongyi Yuan, Zheng Yuan, Runji Lin, Junyang Lin, Chuanqi Tan, Chang Zhou, and Jingren Zhou. 2023.
\newblock \href {http://arxiv.org/abs/2308.07074} {\#instag: Instruction tagging for analyzing supervised fine-tuning of large language models}.

\bibitem[{Merity et~al.(2016)Merity, Xiong, Bradbury, and Socher}]{merity2016pointer}
Stephen Merity, Caiming Xiong, James Bradbury, and Richard Socher. 2016.
\newblock Pointer sentinel mixture models.
\newblock \emph{arXiv preprint arXiv:1609.07843}.

\bibitem[{Miao et~al.(2023)Miao, Oliaro, Zhang, Cheng, Wang, Wong, Chen, Arfeen, Abhyankar, and Jia}]{miao2023specinfer}
Xupeng Miao, Gabriele Oliaro, Zhihao Zhang, Xinhao Cheng, Zeyu Wang, Rae Ying~Yee Wong, Zhuoming Chen, Daiyaan Arfeen, Reyna Abhyankar, and Zhihao Jia. 2023.
\newblock Specinfer: Accelerating generative llm serving with speculative inference and token tree verification.
\newblock \emph{arXiv preprint arXiv:2305.09781}.

\bibitem[{O'Brien and Lewis(2023)}]{obrien2023contrastive}
Sean O'Brien and Mike Lewis. 2023.
\newblock \href {http://arxiv.org/abs/2309.09117} {Contrastive decoding improves reasoning in large language models}.

\bibitem[{Ouyang et~al.(2022)Ouyang, Wu, Jiang, Almeida, Wainwright, Mishkin, Zhang, Agarwal, Slama, Ray, Schulman, Hilton, Kelton, Miller, Simens, Askell, Welinder, Christiano, Leike, and Lowe}]{instructgpt}
Long Ouyang, Jeff Wu, Xu~Jiang, Diogo Almeida, Carroll~L. Wainwright, Pamela Mishkin, Chong Zhang, Sandhini Agarwal, Katarina Slama, Alex Ray, John Schulman, Jacob Hilton, Fraser Kelton, Luke Miller, Maddie Simens, Amanda Askell, Peter Welinder, Paul Christiano, Jan Leike, and Ryan Lowe. 2022.
\newblock \href {http://arxiv.org/abs/2203.02155} {Training language models to follow instructions with human feedback}.

\bibitem[{Pillutla et~al.(2021)Pillutla, Swayamdipta, Zellers, Thickstun, Welleck, Choi, and Harchaoui}]{pillutla2021mauve}
Krishna Pillutla, Swabha Swayamdipta, Rowan Zellers, John Thickstun, Sean Welleck, Yejin Choi, and Zaid Harchaoui. 2021.
\newblock \href {https://openreview.net/forum?id=Tqx7nJp7PR} {{MAUVE}: Measuring the gap between neural text and human text using divergence frontiers}.
\newblock In \emph{Advances in Neural Information Processing Systems}.

\bibitem[{Spector and Re(2023)}]{spector2023accelerating}
Benjamin Spector and Chris Re. 2023.
\newblock Accelerating llm inference with staged speculative decoding.
\newblock \emph{arXiv preprint arXiv:2308.04623}.

\bibitem[{Su et~al.(2022)Su, Lan, Wang, Yogatama, Kong, and Collier}]{su2022contrastive}
Yixuan Su, Tian Lan, Yan Wang, Dani Yogatama, Lingpeng Kong, and Nigel Collier. 2022.
\newblock \href {http://arxiv.org/abs/2202.06417} {A contrastive framework for neural text generation}.

\bibitem[{Su and Xu(2022)}]{su2022empirical}
Yixuan Su and Jialu Xu. 2022.
\newblock An empirical study on contrastive search and contrastive decoding for open-ended text generation.
\newblock \emph{arXiv preprint arXiv:2211.10797}.

\bibitem[{Taori et~al.(2023)Taori, Gulrajani, Zhang, Dubois, Li, Guestrin, Liang, and Hashimoto}]{alpaca}
Rohan Taori, Ishaan Gulrajani, Tianyi Zhang, Yann Dubois, Xuechen Li, Carlos Guestrin, Percy Liang, and Tatsunori~B. Hashimoto. 2023.
\newblock Stanford alpaca: An instruction-following llama model.
\newblock \url{https://github.com/tatsu-lab/stanford_alpaca}.

\bibitem[{Touvron et~al.(2023)Touvron, Martin, Stone, Albert, Almahairi, Babaei, Bashlykov, Batra, Bhargava, Bhosale, Bikel, Blecher, Ferrer, Chen, Cucurull, Esiobu, Fernandes, Fu, Fu, Fuller, Gao, Goswami, Goyal, Hartshorn, Hosseini, Hou, Inan, Kardas, Kerkez, Khabsa, Kloumann, Korenev, Koura, Lachaux, Lavril, Lee, Liskovich, Lu, Mao, Martinet, Mihaylov, Mishra, Molybog, Nie, Poulton, Reizenstein, Rungta, Saladi, Schelten, Silva, Smith, Subramanian, Tan, Tang, Taylor, Williams, Kuan, Xu, Yan, Zarov, Zhang, Fan, Kambadur, Narang, Rodriguez, Stojnic, Edunov, and Scialom}]{llama2}
Hugo Touvron, Louis Martin, Kevin Stone, Peter Albert, Amjad Almahairi, Yasmine Babaei, Nikolay Bashlykov, Soumya Batra, Prajjwal Bhargava, Shruti Bhosale, Dan Bikel, Lukas Blecher, Cristian~Canton Ferrer, Moya Chen, Guillem Cucurull, David Esiobu, Jude Fernandes, Jeremy Fu, Wenyin Fu, Brian Fuller, Cynthia Gao, Vedanuj Goswami, Naman Goyal, Anthony Hartshorn, Saghar Hosseini, Rui Hou, Hakan Inan, Marcin Kardas, Viktor Kerkez, Madian Khabsa, Isabel Kloumann, Artem Korenev, Punit~Singh Koura, Marie-Anne Lachaux, Thibaut Lavril, Jenya Lee, Diana Liskovich, Yinghai Lu, Yuning Mao, Xavier Martinet, Todor Mihaylov, Pushkar Mishra, Igor Molybog, Yixin Nie, Andrew Poulton, Jeremy Reizenstein, Rashi Rungta, Kalyan Saladi, Alan Schelten, Ruan Silva, Eric~Michael Smith, Ranjan Subramanian, Xiaoqing~Ellen Tan, Binh Tang, Ross Taylor, Adina Williams, Jian~Xiang Kuan, Puxin Xu, Zheng Yan, Iliyan Zarov, Yuchen Zhang, Angela Fan, Melanie Kambadur, Sharan Narang, Aurelien Rodriguez, Robert Stojnic, Sergey Edunov, and Thomas
  Scialom. 2023.
\newblock \href {http://arxiv.org/abs/2307.09288} {Llama 2: Open foundation and fine-tuned chat models}.

\bibitem[{Vijayakumar et~al.(2018)Vijayakumar, Cogswell, Selvaraju, Sun, Lee, Crandall, and Batra}]{vijayakumar2018diverse}
Ashwin~K Vijayakumar, Michael Cogswell, Ramprasath~R. Selvaraju, Qing Sun, Stefan Lee, David Crandall, and Dhruv Batra. 2018.
\newblock \href {http://arxiv.org/abs/1610.02424} {Diverse beam search: Decoding diverse solutions from neural sequence models}.

\bibitem[{Wei et~al.(2023)Wei, Wang, Schuurmans, Bosma, Ichter, Xia, Chi, Le, and Zhou}]{cot}
Jason Wei, Xuezhi Wang, Dale Schuurmans, Maarten Bosma, Brian Ichter, Fei Xia, Ed~Chi, Quoc Le, and Denny Zhou. 2023.
\newblock \href {http://arxiv.org/abs/2201.11903} {Chain-of-thought prompting elicits reasoning in large language models}.

\bibitem[{Yao et~al.(2022)Yao, Yazdani~Aminabadi, Zhang, Wu, Li, and He}]{yao2022zeroquant}
Zhewei Yao, Reza Yazdani~Aminabadi, Minjia Zhang, Xiaoxia Wu, Conglong Li, and Yuxiong He. 2022.
\newblock Zeroquant: Efficient and affordable post-training quantization for large-scale transformers.
\newblock \emph{Advances in Neural Information Processing Systems}, 35:27168--27183.

\bibitem[{Yona et~al.(2023)Yona, Honovich, Laish, and Aharoni}]{yona2023surfacing}
Gal Yona, Or~Honovich, Itay Laish, and Roee Aharoni. 2023.
\newblock Surfacing biases in large language models using contrastive input decoding.
\newblock \emph{arXiv preprint arXiv:2305.07378}.

\bibitem[{Yuan et~al.(2023)Yuan, Yuan, Li, Dong, Lu, Tan, Zhou, and Zhou}]{rft}
Zheng Yuan, Hongyi Yuan, Chengpeng Li, Guanting Dong, Keming Lu, Chuanqi Tan, Chang Zhou, and Jingren Zhou. 2023.
\newblock \href {http://arxiv.org/abs/2308.01825} {Scaling relationship on learning mathematical reasoning with large language models}.

\end{thebibliography}
\bibliographystyle{acl_natbib}

\appendix

\section{Experiment Details}\label{app:exp_set}

\subsection{Benchmark Details}

(1) \textbf{WikiText}~\cite{merity2016pointer} contains articles from Wikipedia. We follow the pre-processing scripts from \citet{li-etal-2023-contrastive} and result in 1,733 samples. The generation starts with the first 32 tokens as prompts, and the max generation length is set to 256. We report diversity, MAUVE \citep{pillutla2021mauve}, and coherence as metrics, following \citet{li-etal-2023-contrastive}. 

Diversity metrics assess the unique multi-grams in the completion generated from the LMs. Higher diversity scores indicate better lexical diversity in the completion. The diversity is calculated according to:
\begin{align*}
    \text{Div.} = \prod_{n=2}^4 \frac{|\operatorname{Set}(\text{n-grams})|}{|\text{n-grams}|}.
\end{align*}
MAUVE is a metric proposed by \citet{pillutla2021mauve}, which is empirically suggested to have better agreement with human annotations \citep{gao-wan-2022-dialsummeval}.
Coherence evaluates the semantic correlation between the input prefix and the output generation via the similarity of embeddings. We use the sentence embeddings following SimCSE \citep{gao-etal-2021-simcse} and the coherence score is calculated as:
\begin{align*}
    \frac{\text{emb}(x_{\text{prefix}})\cdot\text{emb}(x_{\text{gen}})}{\|\text{emb}(x_{\text{prefix}})\|\|\text{emb}(x_{\text{gen}})\|}.
\end{align*}

(2) \textbf{GSM8k}~\cite{cobbe2021training} contains training and evaluation sets of grade mathematical reasoning problems. We first fine-tune the Llama2\textsubscript{7B} and Llama2\textsubscript{70B} by 3 epochs to produce the amateur and expert LMs.
We report the final accuracy of the test sets.

(3) \textbf{HumanEval}~\cite{chen2021evaluating} measures coding correctness for synthesizing programs from 164 doc-strings. We report the 1-round pass rate~(Pass@1).

(4) \textbf{AlpacaEval}~\cite{alpaca_eval} contains 805 samples from various evaluation sets to evaluate the alignment abilities of LLMs by comparing evaluated models with \textit{text-davinci-003}. We report the win rate judged by GPT-4.

\subsection{Configuration Details}
We use Llama2\textsubscript{7B} as the amateur model while Llama2\textsubscript{70B} as the expert model on WikiText and HumanEval benchmarks to evaluate how SCD performs with pre-trained models.
Then, we fine-tune Llama2\textsubscript{7B} and Llama2\textsubscript{70B} on the GSM8k training set to evaluate the SCD performance with supervised fine-tuning models on the mathematical reasoning task.
We also apply Llama2chat\textsubscript{7B} and Llama2chat\textsubscript{70B} on AlpacaEval to assess LLMs for human alignment using SCD.
We set the softmax temperature consistent to 0.7 on WikiText and AlpacaEval while 0.001 on other benchmarks.
In SCD and SD, we always set the prediction temperature from the amateur LMs to 1.0 for fair comparison.
All experiments are conducted on 2 A100 80G GPUs with KV cache implementation.

\subsection{Hyper-parameter Details}\label{app:best-hyper}
We conduct grid searches regarding $\alpha$ and $\beta$ for the best performance of CD and SCD. The best hyper-parameter settings for the results in \Cref{tab:main_table} are listed in \Cref{apptab:hypersetting}.

\begin{table*}[t]
    \centering
    \begin{tabular}{l|cccccccc}
    \toprule
    &\multicolumn{2}{c}{WikiText}& \multicolumn{2}{c}{AlpacaEval} &\multicolumn{2}{c}{GSM8k} &\multicolumn{2}{c}{HumanEval}\\
    &$\alpha$&$\beta$ &$\alpha$&$\beta$ &$\alpha$&$\beta$ &$\alpha$&$\beta$ \\
    \midrule
      CD\textsubscript{ori} & 0.1&- & 0.5&-& 0.5&-& 0.5&- \\
      SCD\textsubscript{ori} & 0.1&- & 0.5&-& 0.5&-& 0.5&- \\
      CD\textsubscript{ori} & 0.1&0.5 & 0.5&0.5& 0.5&1.0& 0.5&0.5 \\
      SCD\textsubscript{ori} & 0.1&0.5 & 0.5&0.5& 0.5&1.0& 0.5&0.5 \\
      \bottomrule
    \end{tabular}
    \caption{The hyper-parameter settings for the results in \Cref{tab:main_table}}
    \label{apptab:hypersetting}
\end{table*}

\section{Proof of Theorem \Cref{theo:main}}\label{app:proof}

\begin{theorem}
The expected acceleration factor in decoding runtime is $\frac{1-\lambda^{\gamma+1}}{(1-\lambda)(1 + c\gamma + c\lambda^\gamma)}$.
\end{theorem}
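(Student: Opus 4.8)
The plan is to write the acceleration factor as the ratio of the per-token runtime of vanilla auto-regressive decoding with $\mathcal{M}_e$ to the per-token runtime of SCD, working under the standard modeling assumption (following \citet{Leviathan2022FastIF}) that each drafted token is independently accepted with probability $\lambda$, and normalizing so that one forward pass of $\mathcal{M}_e$ costs $1$ while one forward pass of $\mathcal{M}_a$ costs $c$. With this normalization vanilla decoding emits exactly one token per unit cost, so the acceleration factor reduces to the SCD \emph{tokens-per-cost} rate, i.e. the expected number of tokens produced in one iteration divided by the expected runtime of one iteration.

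First I would compute the expected token yield per iteration. Let $N$ be the number of leading accepted tokens among the $\gamma$ drafts; independence gives $P(N \ge k) = \lambda^k$, hence $E[N] = \sum_{k=1}^{\gamma}\lambda^k = \frac{\lambda(1-\lambda^\gamma)}{1-\lambda}$. The key observation is that every iteration emits exactly $N+1$ tokens: if the first rejection occurs at position $N+1$ the rejected token is resampled, and if all $\gamma$ drafts are accepted a bonus token is appended. Therefore the expected yield is $E[N]+1 = \frac{1-\lambda^{\gamma+1}}{1-\lambda}$, which is precisely the numerator of the claimed formula.

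Next I would compute the expected per-iteration runtime. Drafting the $\gamma$ tokens requires $\gamma$ sequential forward passes of $\mathcal{M}_a$ (cost $c\gamma$), and validating them requires a single parallel forward pass of $\mathcal{M}_e$ (cost $1$). The SCD-specific contribution comes from the all-accepted event, which has probability $\lambda^\gamma$: forming the contrastive distribution $P_n^\tau$ at position $\gamma+1$ requires the amateur logit $P_{\mathcal{M}_a}(x_{\gamma+1})$, which has not yet been computed, so an extra $\mathcal{M}_a$ pass (cost $c$) is incurred. Summing yields an expected runtime of $1 + c\gamma + c\lambda^\gamma$, the denominator factor, and dividing the yield by the runtime gives $\frac{1-\lambda^{\gamma+1}}{(1-\lambda)(1+c\gamma+c\lambda^\gamma)}$.

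The geometric-sum bookkeeping is routine; the main subtlety is correctly isolating the SCD-specific overhead relative to plain speculative decoding. I would take care to justify that the extra $c\lambda^\gamma$ term---absent in \citet{Leviathan2022FastIF}, where the bonus token is drawn directly from $\mathcal{M}_e$---arises solely from the need for $P_{\mathcal{M}_a}(x_{\gamma+1})$ when all drafts are accepted, matching the highlighted line of Algorithm~\ref{algo}. I would also flag that the i.i.d.\ acceptance assumption is an idealization used to obtain a closed form, so $\lambda$ is to be read as an effective (empirical) acceptance rate rather than a per-position constant.
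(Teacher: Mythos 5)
Your proof is correct and follows essentially the same route as the paper's: compute the expected per-iteration runtime $1 + c\gamma + c\lambda^\gamma$ (including the extra amateur pass in the all-accepted case), combine it with the expected token yield $\frac{1-\lambda^{\gamma+1}}{1-\lambda}$, and take the ratio. The only difference is that you derive the expected yield from scratch via $E[N]=\sum_{k=1}^{\gamma}\lambda^k$, whereas the paper simply cites Equation (1) of \citet{Leviathan2022FastIF}; your version is therefore a bit more self-contained but not a different argument.
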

\begin{proof}
Similar to Theorem 3.8 in \citet{Leviathan2022FastIF}, given the token acceptance rate $\lambda$ and the runtime per forward computation step for $\mathcal{M}_e$ and $\mathcal{M}_a$ are $T$ and $cT$. The total runtime required for each iteration is $T+c\gamma T+c\lambda^\gamma T$, where $\mathcal{M}_a$ requires $\gamma$ generation steps and possibly one additional step forward computation if all $\gamma$ tokens are accepted while $\mathcal{M}_a$ requires one forward computation for token validity checking.
Following Equation (1) in \citet{Leviathan2022FastIF}, the expected generated token number per iteration is $\frac{1-\lambda^{\gamma+1}}{1-\lambda}$.
Therefore, the expected runtime needed of SCD is $\frac{1-\lambda}{1-\lambda^{\gamma+1}}(T+c\gamma T+c\lambda^\gamma T)$, hence the expected acceleration factor is $\frac{1-\lambda^{\gamma+1}}{(1-\lambda)(1 + c\gamma + c\lambda^\gamma)}$.
\end{proof}

\section{Case Study}\label{app:case}

\begin{tcolorbox}[title=Case Study on GSM8k,colback=white]\label{box:case}
\small
Judy teaches 5 dance classes, every day, on the weekdays and 8 classes on Saturday.  If each class has 15 students and she charges \$15.00 per student, how much money does she make in 1 week?
\tcbsubtitle{Responses}
[\textcolor{red}{RED} and \textcolor{blue}{BLUE} tokens are generated by the expert LM after rejection and all acceptance, respectively. Other tokens are generated by the amateur LM.] \\\\
\textcolor{red}{She} teaches 5 \textcolor{blue}{classes} \textcolor{red}{every} weekday so that's \textcolor{blue}{5}*5 \textcolor{red}{=} <<5*5=25>>25 classes \\
She teaches \textcolor{red}{2}5 \textcolor{red}{week}day classes and \textcolor{blue}{8} \textcolor{red}{Saturday} classes for a total \textcolor{blue}{of} 25+\textcolor{blue}{8} = <<25\textcolor{blue}{+}8=33\textcolor{blue}{>>}33 classes \\
\textcolor{red}{Each class has} 15 students \textcolor{blue}{so} \textcolor{red}{she teach}es \textcolor{red}{1}5\textcolor{red}{*}33 = <<\textcolor{blue}{1}5*33\textcolor{blue}{=}495>>\textcolor{blue}{4}95 students \textcolor{red}{in} 1 week \\
\textcolor{blue}{She} charges \$15\textcolor{blue}{.}00 per student \textcolor{blue}{and} \textcolor{red}{she teach}es 49\textcolor{blue}{5} students \textcolor{red}{so she} makes 15\textcolor{blue}{*}495 = \textcolor{blue}{\$}<<15*\textcolor{blue}{4}95=7\textcolor{blue}{4}\textcolor{red}{2}5.00>>7,425.00 in 1 week
\end{tcolorbox}
In this case, we can see that the rejected and re-sampled tokens are usually the beginning of a sentence, numbers, operations, or named entities, which are generally informative tokens in the reasoning chain of thoughts. This also indicates that quality improvement originates from re-sampling informative tokens by contrastive token distribution while the acceleration comes from speculative prediction of the amateur LMs.

\section{Additional Results}\label{app:addition_result}

\begin{figure*}[t]
    \centering
    \includegraphics[width=\linewidth]{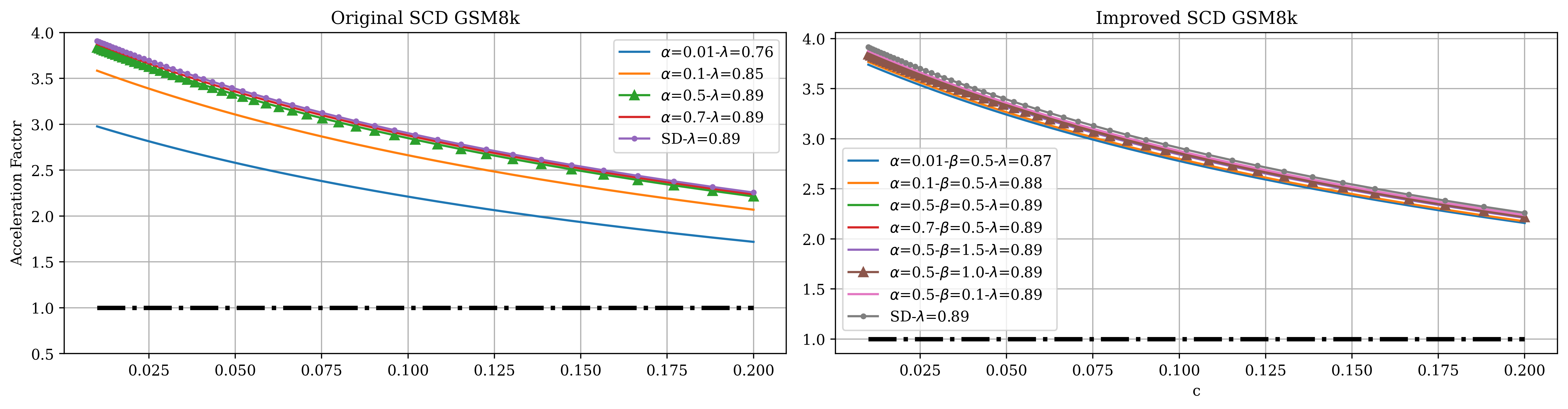}
    \caption{Hyper-parameter analysis on expected acceleration factors regarding empirical acceptance rate $\lambda$.
    The best hyper-parameter settings as in \Cref{tab:main_table} are the lines marked with triangles.
    }
    \label{app:fig:analysis_hyperparam_gsm8k}
\end{figure*}

\begin{figure*}[t]
    \centering
    \includegraphics[width=\linewidth]{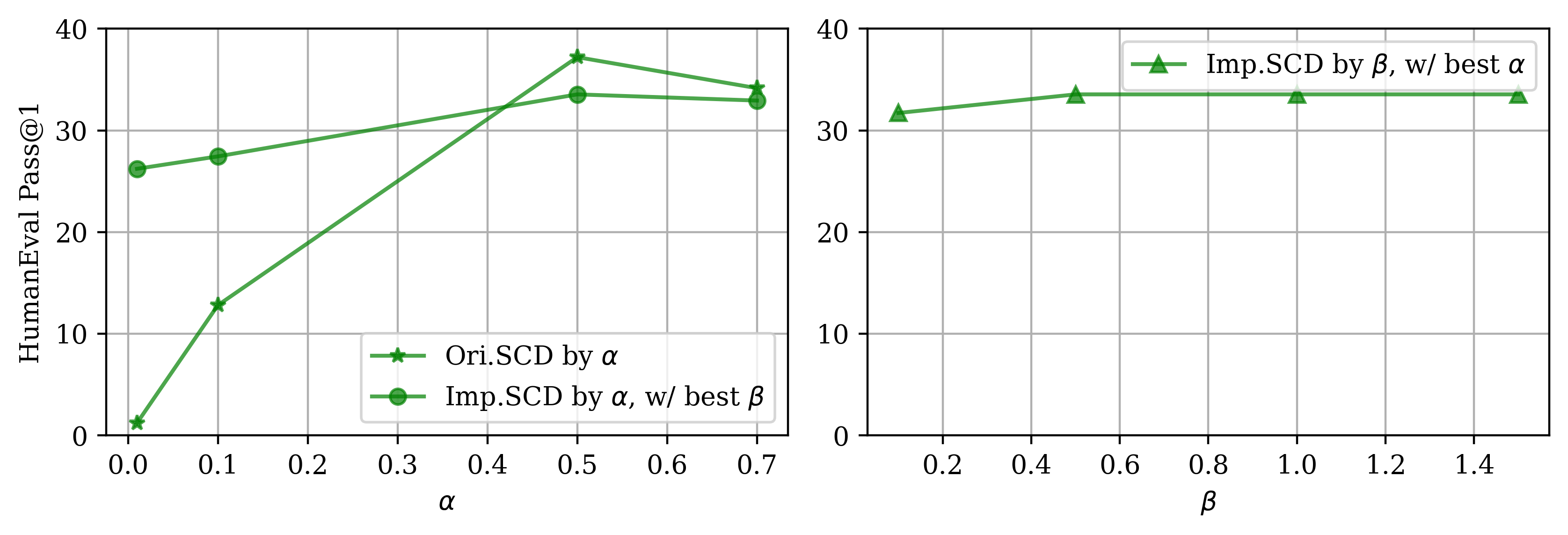}
    \caption{Performance sensitivity regarding $\alpha$ and $\beta$.
    }
    \label{app:fig:analysis_performance_final_humaneval}
\end{figure*}

\end{document}